\def\eqref#1{equation~\ref{#1}}
\def\1{\bm{1}}
\DeclareMathAlphabet{\mathsfit}{\encodingdefault}{\sfdefault}{m}{sl}
\SetMathAlphabet{\mathsfit}{bold}{\encodingdefault}{\sfdefault}{bx}{n}
\newtheoremstyle{nonitalictheorem}
  {\topsep}         % Space above
  {\topsep}         % Space below
  {\normalfont}     % Font used for the theorem body
  {}                % Indent amount
  {\bfseries}       % Font used for the theorem head
  {.}               % Punctuation after theorem head
  {.5em}            % Space after theorem head
  {}                % Theorem head specification
\newtheorem{theorem}{Theorem}
\newtheorem{assumption}{Assumption}
\newtheorem{Corollary}{Corollary}
\theoremstyle{nonitalictheorem}
\def\BibTeX{{\rm B\kern-.05em{\sc i\kern-.025em b}\kern-.08em
T\kern-.1667em\lower.7ex\hbox{E}\kern-.125emX}}
\begin{document}
%
% paper title
% Titles are generally capitalized except for words such as a, an, and, as,
% at, but, by, for, in, nor, of, on, or, the, to and up, which are usually
% not capitalized unless they are the first or last word of the title.
% Linebreaks \\ can be used within to get better formatting as desired.
% Do not put math or special symbols in the title.
\title{Prioritizing Modalities: Flexible Importance Scheduling in Federated Multimodal Learning}

%\title{Federated Multimodal Learning with Flexible Modality Importance Scheduling}
\author{Jieming Bian, Lei Wang, 
    Jie Xu,~\IEEEmembership{Senior Member,~IEEE}% <-this % stops a space
\thanks{Jieming Bian, Lei Wang and Jie Xu are with the Department of Electrical and Computer Engineering, University of Florida, Gainesville, FL 32611, USA.
Email: \{jieming.bian, leiwang1, jie.xu\}@ufl.edu. 
}
}

% \markboth{Journal of \LaTeX\ Class Files,~Vol.~18, No.~9, September~2020}%
% {How to Use the IEEEtran \LaTeX \ Templates}

\maketitle
% \author{Jieming Bian,
%     Jie Xu% <-this % stops a space
% \IEEEcompsocitemizethanks{\IEEEcompsocthanksitem Jieming Bian and Jie Xu are with the Department
% of Electrical and Computer Engineering, University of Miami, Coral Gables,
% FL, 33146, USA. Email: {jxb1974, jiexu}@miami.edu.\protect\\
% % note need leading \protect in front of \\ to get a newline within \thanks as
% % \\ is fragile and will error, could use \hfil\break instead.
% }}

\begin{abstract}
Federated Learning (FL) is a distributed machine learning approach that enables devices to collaboratively train models without sharing their local data, ensuring user privacy and scalability. However, applying FL to real-world data presents challenges, particularly as most existing FL research focuses on unimodal data. Multimodal Federated Learning (MFL) has emerged to address these challenges, leveraging modality-specific encoder models to process diverse datasets. Current MFL methods often uniformly allocate computational frequencies across all modalities, which is inefficient for IoT devices with limited resources. In this paper, we propose FlexMod, a novel approach to enhance computational efficiency in MFL by adaptively allocating training resources for each modality encoder based on their importance and training requirements. We employ prototype learning to assess the quality of modality encoders, use Shapley values to quantify the importance of each modality, and adopt the Deep Deterministic Policy Gradient (DDPG) method from deep reinforcement learning to optimize the allocation of training resources. Our method prioritizes critical modalities, optimizing model performance and resource utilization. Experimental results on three real-world datasets demonstrate that our proposed method significantly improves the performance of MFL models.
\end{abstract}

\begin{IEEEkeywords}
Federated learning, Multimodal Federated Learning, Computational Efficiency
\end{IEEEkeywords}
% \begin{abstract}
% This paper presents a study on asynchronous Federated Learning (FL) in a mobile network setting. The majority of FL algorithms assume that communication between clients and the server is always available, however, this is not the case in many real-world systems. To address this issue, the paper explores the impact of mobility on the convergence performance of asynchronous FL. By exploiting mobility, the study shows that clients can indirectly communicate with the server through another client serving as a relay, creating additional communication opportunities. This enables clients to upload local model updates sooner or receive fresher global models. We propose a new FL algorithm, called FedMobile, that incorporates opportunistic relaying and addresses key questions such as when and how to relay. We prove that FedMobile achieves a convergence rate $O(\frac{1}{\sqrt{NT}})$, where $N$ is the number of clients and $T$ is the number of communication slots, and show that the optimal design involves an interesting trade-off on the best timing of relaying. The paper also presents an extension that considers data manipulation before relaying to reduce the cost and enhance privacy. Experiment results on a synthetic dataset and two real-world datasets verify our theoretical findings.  
% \end{abstract}

% Note that keywords are not normally used for peerreview papers.
% \begin{IEEEkeywords}

% \end{IEEEkeywords}

% make the title area
\maketitle
\section{Introduction}
\IEEEPARstart{F}{ederated}  Learning (FL) \cite{mcmahan2017communication} is a distributed machine learning approach where clients collaborate to train a machine learning model by sharing model parameters. This decentralized learning model enables devices to train models collaboratively without sharing their local, private data. Existing FL works \cite{xu2019hybridalpha, mothukuri2021survey, bonawitz2019towards, yu2019linear, yang2021achieving, bian2022mobility, bian2024accelerating} have enhanced several aspects of this method, including user data privacy, scalability to new clients and datasets, and accelerated model convergence rates. Despite these strengths, practical challenges arise when applying FL to complex, real-world data, particularly as most existing FL research focuses on unimodal data. In practical FL applications \cite{yu2021fedhar, wu2020fedhome, baghersalimi2023decentralized}, clients often comprise Internet of Things (IoT) devices, such as smartphones and unmanned aerial vehicles (UAVs), equipped with multimodal sensors capable of capturing diverse datasets.

Multimodal data, which often have varying data structures, present challenges for conventional FL methods designed for unimodal data, underscoring the need for approaches that can handle multimodal information, which is more reflective of real-world applications. Multimodal Federated Learning (MFL) has emerged in response to this need. A common approach in MFL involves employing a modality-specific encoder model for each type of data to extract features, reducing high-dimensional raw data to lower-dimensional feature representations. These features are then input into a header encoder, typically comprising deep neural network (DNN) layers with a softmax function for classification. The existing works \cite{feng2023fedmultimodal, saeed2020federated, xiong2022unified, chen2022fedmsplit, yu2023multimodal, peng2024fedmm} in MFL have different focuses. Some studies \cite{yu2023multimodal, xiong2022unified} develop methods to more effectively utilize multimodal data at the feature level to enhance prediction accuracy. Others \cite{chen2022fedmsplit, feng2023fedmultimodal} address scenarios where certain modalities are missing from some clients, or where, despite the ability to process all modalities, limited communication resources restrict the upload of updates for selected modalities to the server \cite{yuan2024communication}. 

\begin{figure}[t]
	\centering	\includegraphics[width=0.99\linewidth]{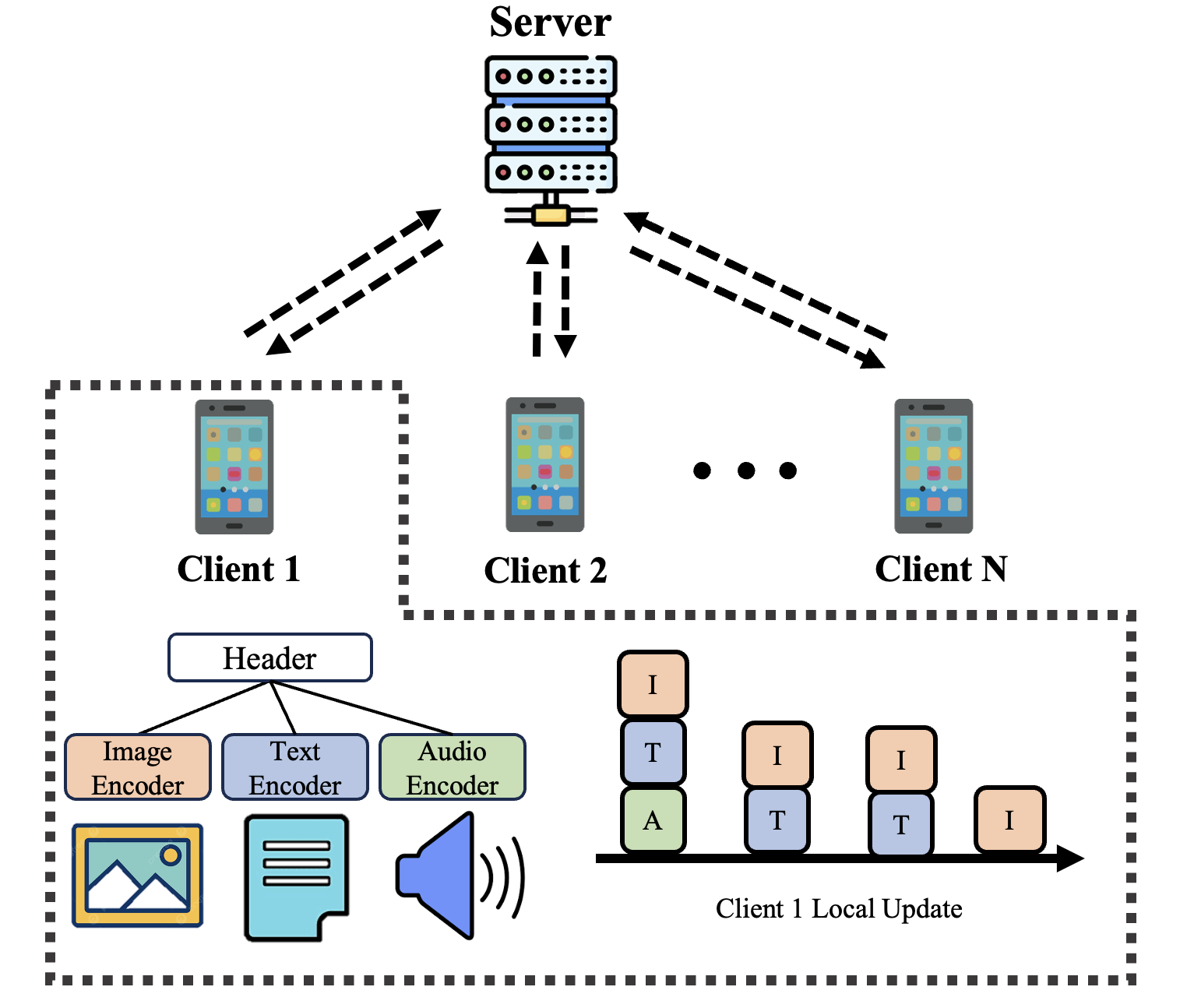}
	\caption{Modality-specific encoders extract features from each modality data, which are then input into a header encoder. Given the differing formats and varying importance of modalities, training of modality encoders should be unequal.} \label{fig: MFL}
\vspace{-0.2in}
\end{figure}

While there have been improvements in model performance, none of the existing MFL methods address the critical issue of computational resource allocation. Current MFL approaches uniformly distribute computational resources across all existing modalities' training processes. However, in real-world applications, most clients are IoT devices with limited computational resources and constrained local training timeframes \cite{imteaj2021survey, nguyen2021federated}. Efficient management of these limited resources in each training round is essential for optimizing model performance. Moreover, in the MFL setting, data comes from various modalities, each with distinct structures and dimensions, which can be significantly large. The complexity of feature extraction varies across these modalities, necessitating different types of encoder models for each. This diversity in complexity has direct implications for training time and computational resource allocation. Allocating the same training frequencies to all modality-specific encoders leads to training inefficiencies.

\subsection{A Motivational Study}
In this paper, we aim to enhance the Federated Learning framework by introducing the capability to adaptively adjust the training frequency for each modality encoder during local training epochs, thereby addressing the heterogeneous importance and training complexity of different modalities. Our motivation stems from a case study on a human activities classification task using the UCI-HAR dataset \cite{UCI}, which comprises two modalities: Accelerometer and Gyroscope data. Given the distinct information each modality provides, different encoder structures are used for different modalities: a CNN-based encoder for the Accelerometer modality and an LSTM-based encoder for the Gyroscope modality. We consider four training strategies:
\begin{itemize}
    \item \textbf{Gyro}: Only the Gyroscope data is used for training the classifier, hence the DNN model trains only the LSTM-based encoder.
    \item \textbf{Acce}: Only the Accelerometer data is used for training the classifier, hence the DNN model trains only the CNN-based encoder.
    \item \textbf{Both-Par}: Both Gyroscope and Accelerometer data are used, and the DNN model trains both encoders in parallel.
    \item \textbf{Both-Seq}: Both modality data are used, and the DNN model trains both encoders sequentially in each training round.
\end{itemize}
The experiment is conducted in an FL setting with 10 non-iid clients, each having 800 training data samples. We derive the following important insights from this experiment:

\textit{1. Varying Importance Across Modalities}: As illustrated in Fig. \ref{motivation}(a), the classification accuracy varies depending on the modalities involved in training. As expected, using both modalities to train the classification model results in the highest accuracy. However, the accuracy achieved by using data from only one modality differs depending on which modality is employed. This implies that different modalities have different importance to the training performance, and with limited computational resources, better training performance may be achieved by giving higher priority to more critical modalities.

\textit{2. Varying Training Complexities Across Modalities}: As shown in Fig. \ref{motivation}(b), the training time varies for different strategies. Training a single modality encoder is faster than training both modality encoders in parallel or sequentially. This suggests the feasibility of increasing the training frequencies for higher-priority modality encoders compared to the conventional strategy where both modalities must be trained simultaneously. Additionally, the training time for a single modality encoder varies across modalities as the encoder structures entail different complexities. Thus, allocating training resources to modality encoders must consider both the importance of the modality and its training complexity. Finally, training both modality encoders in parallel saves time compared to the sum of training times for the encoders separately or training both modality encoders sequentially. Therefore, if sufficient training resources are available, updating the encoders in parallel is still preferable.

%\textbf{Secondly}, Fig. \ref{motivation}(b) shows that training encoders for different modalities takes different amounts of time. Training a single modality while keeping others fixed is faster than training all modality encoders simultaneously. This insight suggests the feasibility of allocating training resources separately for each modality encoder. \textbf{Lastly}, Fig. \ref{motivation}(b) also demonstrates that training both modality encoders simultaneously saves time compared to the sum of training times for the encoders separately. This suggests that if sufficient training resources are available, updating the encoders simultaneously is preferable. Therefore, when considering training allocation for $M$ modalities, we should focus on the allocation of combinations (i.e., all $2^M$ combinations) rather than single modalities.

\begin{figure}[t]
\centering
 \subfloat[Classification Accuracy]{\includegraphics[width=0.495\linewidth, height=0.33\linewidth]{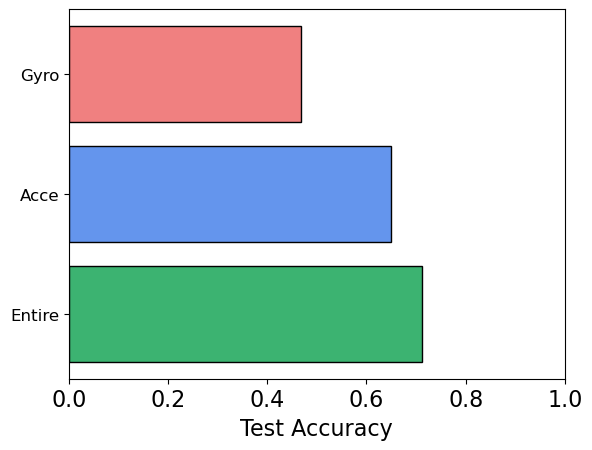}} 
\subfloat[Training Time]{\includegraphics[width=0.495\linewidth,  height=0.33\linewidth]{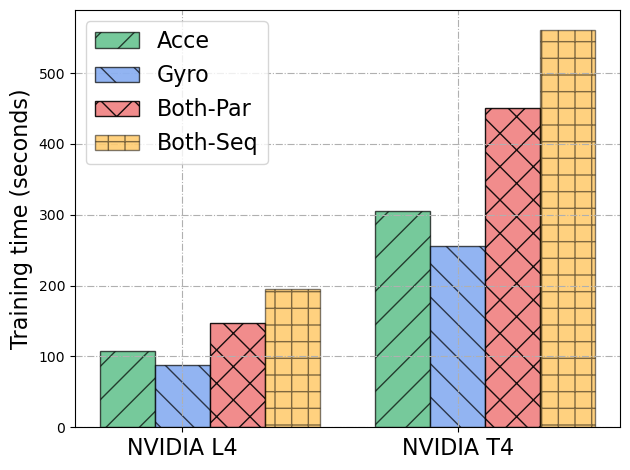}} 
\caption{Motivational Study on the UCI-HAR Dataset. Figure (a) shows the classification accuracy on the test dataset in a federated learning setting. Figure (b) displays the training time required for 20,000 updates on a single GPU (Nvidia L4 or T4), measured in seconds.} \label{motivation}
\end{figure}

\subsection{Our Contribution}
Our motivational study suggests that modalities have varying importance and corresponding encoders with varying training complexities. Furthermore, it is possible to prioritize modalities by increasing their training frequencies, potentially enhancing model accuracy. However, determining the optimal computational resource allocation for different modalities during training is challenging since directly correlating modalities with final model performance is infeasible.

In this paper, we introduce a novel, computationally efficient approach to Multimodal Federated Learning (MFL), which we refer to as FlexMod (Flexible Modality Scheduling). Our methodology comprises the following steps:

\textbf{First}, we investigate the impact of each modality based on three factors: the quality of each trained modality encoder, the importance of each modality’s features in the final classification, and the time required for training each type of encoder. Specifically, we employ prototype learning to assess the training adequacy of each modality, where prototypes serve as a form of global prior knowledge. To evaluate the importance of each modality, we utilize Shapley values \cite{shapley1953value} to quantify their respective contributions to model performance. As the motivation study demonstrates that training modalities in parallel (i.e., Both-Par) saves time compared to training modalities sequentially (i.e., Both-Seq), we extend the modality factors to modality combination factors.

\textbf{Second}, we formulate an optimization problem for each local training epoch to determine the training frequency among different modalities, considering the aforementioned factors in the objective function. In each optimization problem, we adopt the Deep Deterministic Policy Gradient (DDPG) method \cite{lillicrap2015continuous} from deep reinforcement learning (DRL) to adaptively adjust the weights of each factor in our objective function.

\textbf{Lastly}, after determining the training frequency for each modality combination, we set the training orders of the modalities in each local training epoch based on a theoretical analysis of the training performance. Our experimental results demonstrate that our proposed method achieves faster convergence compared to conventional MFL approaches and higher accuracy compared to unimodal FL approaches.

% To summarize, our main contributions are as follows:
% \begin{itemize}
%     \item We introduce the first approach to enhance computational efficiency in MFL, as shown in Fig. \ref{fig: MFL}. This approach involves unequal training of modality encoders, prioritizing the important modality. This training strategy aims to maximize model performance rather than maintaining uniform training duration across all modalities.
%     \item We employ prototype learning to assess the performance of modality encoder training. Additionally, we leverage Shapley values to quantify the contributions of each modality encoder, informing the allocation of computational resources accordingly.
%     \item We apply the deep reinforcement learning idea to determine the best combination of training performance and Shapley values.
%     \item We validate our method on three real world dataset.
% \end{itemize}

To summarize, our main findings are as follows:
\begin{itemize}
    \item From our motivational study, we observed that different modalities have varying impacts on model performance and training complexities. This led us to develop the first method to enhance computational efficiency in MFL by adaptively prioritizing the training of more important modality encoders, significantly improving overall model performance.
    \item By using prototype learning, we evaluated the training effectiveness of modality encoders. Additionally, we quantified the contribution of each modality using Shapley values. This dual evaluation guided our strategy for more effective allocation of computational resources.
    \item We integrated deep reinforcement learning to determine the optimal balance between training performance and the importance of each modality, leading to a dynamic and efficient resource allocation approach.
    \item Our method has been validated on three real-world datasets, demonstrating that prioritizing critical modalities and using our adaptive strategy significantly enhances model performance.
\end{itemize}

The remainder of this paper is structured as follows: Section~\ref{Related Work} discusses related works. Section~\ref{Pre} states the system model and mathematically formulates the problem. Section~\ref{sec:proposed} introduces the FlexMod algorithm. Section~\ref{sec:exp} presents extensive experiments to validate the FlexMod algorithm. Finally, Section~\ref{sec:con} concludes the paper.

\section{Related Works}
\label{Related Work}
\subsection{Federated Learning}
Federated Learning (FL) aims to train a global model through collaboration among multiple clients while preserving their data privacy. FedAvg \cite{mcmahan2017communication}, the pioneering work in FL, demonstrates the advantages of this approach in terms of privacy and communication efficiency by aggregating local model parameters to train a global model. A critical problem in Federated Learning is computational efficiency \cite{yang2020energy, almanifi2023communication}. Many studies aim to improve computational efficiency to address the challenges posed by resource-constrained devices. For example, \cite{yang2020energy, anh2019efficient, tran2019federated} considers resource allocation to enhance computational efficiency, \cite{taik2021data, rai2022client} selects clients based on data quality, and \cite{jiang2022model, diao2020heterofl} performs model pruning to achieve computational efficiency. However, these existing works focus on unimodal conditions. Our paper is the first attempt to optimize computational resource allocation in a multimodal federated learning setting.

\subsection{Multimodal Federated Learning}
While existing FL methods largely focus on unimodal applications, a significant number of real-world applications involve multimodal data streams. \cite{saeed2020federated} investigates FL using multi-sensory data and proposes a self-supervised learning approach to learn robust multimodal representations in FL. \cite{xiong2022unified} designs a multimodal FL framework using the cross-attention mechanism. Moreover, \cite{chen2022fedmsplit} proposes FedMSplit, which addresses the issue of missing modalities in the multimodal setting. \cite{yu2023multimodal} uses a contrastive representation-level ensemble to learn a larger server model from heterogeneous clients across multiple modalities. \cite{feng2023fedmultimodal} introduces FedMultimodal, an FL benchmark for multimodal applications.  While these methods have led to improvements in model performance, none of the existing MFL methods address the critical issue of computational resource allocation. Current MFL approaches uniformly distribute computational resources across all existing modalities' training processes. Our work aims to optimize this allocation to enhance overall training efficiency and performance.

\subsection{Federated Learning with Prototype}
The concept of a prototype in this context refers to the average feature vectors of samples within the same class. In the FL literature, prototypes serve to abstract knowledge while preserving privacy. Specifically, some approaches \cite{mu2023fedproc, tan2022fedproto, wang2024taming} focus on achieving feature-wise alignment with global prototypes to solve the issue of domain shift. \cite{tan2022federated} employs prototypes to capture knowledge across clients, constructing client representations in a prototype-wise contrastive manner using a set of pre-trained models. These works mainly apply prototypes in a single-modality federated learning setting. Our work, however, considers utilizing global prototypes to reflect the training quality of modality encoders in MFL setting.

\subsection{Federated Learning with Deep
Reinforcement Learning}
The application of Deep Reinforcement Learning (DRL) to optimize the federated learning process has been validated as a feasible approach. Amount of FL studies incorporate DRL methodologies. For instance, \cite{wang2020optimizing} employs deep Q-learning to select a subset of devices in each communication round, aiming to maximize a reward that increases validation accuracy while minimizing the number of communication rounds. \cite{mao2024joint, rjoub2022trust} use DRL to manage client selection and bandwidth allocation in wireless FL. Additionally, \cite{zhan2020experience, zheng2022exploring} design DRL-based algorithms to allocate computation resources in FL. In this paper, we propose a DRL method to determine the weighting factor for modality combination quality.

\section{Preliminaries}
\label{Pre}
\subsection{System Model}
To clarify our problem, we begin by presenting essential notations, as outlined in Table \ref{table:Essential Notations}.

\begin{table}
\caption{Essential Notations}
\label{table:Essential Notations}
\centering
\begin{tabular}{lll}
    \toprule
    Notation     & Definition   \\
    \midrule
$M$        & Number of modality.  \\
$N$      & Number of client.  \\
$K$ & Number of label.\\
$S$        & Number of modality combinations, $S=2^M - 1$.  \\
$r$ & Index of local training period. \\
$E_r$ & The number of local updates in local period $r$. \\
$\theta^m$        & Modality $m$'s encoder model.  \\
$\theta^0$        & Header encoder model.  \\
$\Theta$        & Entire model, $\Theta = \{\theta^0, \theta^1, \dots, \theta^M\}$.  \\
$t_s$ & Unit training time for combination $s$, $s \in \{1, \dots, S\}$. \\
$T$ & Available training time for each local training. \\
$a_s^r$  & Frequency of combination $s$ during $r$-th local training. \\
$\boldsymbol{a}^r$  & Frequencies of all combinations during $r$-th local training.\\
$p_{n,k}^m$ & Local prototype of client $n$ on modality $m$ with label $k$.\\
$p_{k}^m$ & Global prototype on modality $m$ with label $k$.\\
$\omega^m$ & Encoder quality index of modality $m$.\\
$\Omega^s$ & Encoder quality index of modality combination $s$.\\
$\gamma^m$ & Importance index of modality $m$. \\
$\Gamma^s$ & Importance index of modality combination $s$. \\
$U^r$ & Utility value during the local training period $r$. \\
$\beta^r$ & Sum weight between importance and encoder quality.\\
$\mathcal{s}^r$ & State of local training period $r$ in DRL.\\
$\mathcal{A}^r$ & Action of local training period $r$ in DRL.\\
\bottomrule
\end{tabular}
\end{table}

We consider a multimodal federated learning (MFL) system encompassing $N$ clients and a single server. The objective is to enable cooperative training across these clients on a unified global model using multimodal data, with a focus on maintaining data privacy. We specifically consider a scenario where every client, denoted as $n$ for $n \in \{1, \dots, N\}$, possesses data of $M$ modalities without any missing modality, represented as $\mathcal{D}_n = (X_n^1, X_n^2, \dots, X_n^M; Y_n) = \{(x_{n, i}^1, x_{n, i}^2, \dots, x_{n, i}^M; y_{n, i})\}_{i=1}^{|\mathcal{D}_n|}$. Here, $x_{n,i}^m$ indicates the $m^{th}$ modality data of the $i^{th}$ sample in client $n$, and $y_{n,i}$ denotes the corresponding label. Without loss of generality, in this paper we consider each client $n$ contains the same amount of training data samples.

The architecture shared among the clients incorporates modality encoders ${\theta^1, \dots, \theta^M}$ to distill feature-level information from the inputs and a header encoder $\theta^0$ that integrates these features to predict labels. Considering the diversity of modalities, the modality encoders can have different structures. This diversity underscores the varying complexities inherent in training across different modalities. The feature vector for the $m^{th}$ modality of client $n$, extracted by encoder $\theta^m$, is denoted as $Z_n^m = \theta^m(X_n^m)$. The header encoder $\theta^0$ then synthesizes the outputs $Z_n^m$ from each modality encoder to forecast the labels $\hat{Y}_n$ for client $n$'s data:

\begin{align}
\hat{Y}_n = \theta^0(Z_n^1, \dots, Z_n^M),
\end{align}
where $Z_n^m =\theta^m(X_n^m)$ is the feature vectors extracted of modality $m$ at client $n$. We further denote $\Theta = \{\theta^0, \theta^1, \dots, \theta^M\}$ to represent the entire model.

The global objective is framed as an optimization problem aiming to minimize the loss function $f$, which evaluates the discrepancy between the predicted and actual labels across all clients:

\begin{align}
\text{min}_{\theta^0, \theta^1, \theta^2, \dots, \theta^M} = \sum_{n=1}^N \frac{1}{N} f(\hat{Y}_n, Y_n).
\end{align}

\subsection{Problem Formulation}
We consider that during each local training period \(r\), the maximum training time for each client is constrained to \(T\). Consequently, this limitation determines the number of local updates possible for each client within the specified period. Contrary to prior studies, which consistently train the entire model (i.e., \(\Theta\)) concurrently, we focus on a critical oversight: the distinct architectural designs of encoders necessitate varying levels of training resources and present different challenges.

We assume that all clients have comparable computational capabilities, allowing them to conclude their training sessions in a nearly synchronous manner. This setting aligns with existing MFL frameworks, where each client communicates with the central server only after their local training. Training for each encoder may occur simultaneously or separately. As our motivation study reveals, the time spent training modality combinations is less than the sum of training the modalities separately. We focus on the computational resource allocation for each modality combination rather than for individual modalities. With $M$ distinct modalities available, the number of possible training combinations is $2^M - 1$, considering the inclusion or exclusion of each modality $m$ (excluding the case where no modalities are chosen). For convenience, we denote this total number of combinations as $S = 2^M - 1$ and the set of modality included in the combination $s$ as $\mathcal{C}_s$. The variable $a_s^r$ represents the frequency of selecting combination $s$ during the $r$-th local training session. It is presumed that $a_s^r$ is uniform across all clients in the system and is a non-negative integer. The vector $\boldsymbol{a}^r = (a^r_1, \ldots, a^r_S)$ summarizes the choices of modality combinations for the $r$-th local training period. Furthermore, we denote $t_s$ as the time cost for training each combination $s$ once. As we consider the training data sizes are the same among each client, $t_s$ stands for all clients' training of combination $s$. Each unit training time $t_s$ for combination $s$ is determined and measured before training begins, thus considered a known value in advance. Consequently, for the local training period $r$, the total computational time dedicated to training combination $s$ equals $t_s \cdot a^r_s$. 

The computational constraints of each client limit their ability to train the complete model (comprising the header \(\theta^0\) and the encoders \(\theta^1, \dots, \theta^M\)) as comprehensively as desired. Previous methods have aimed to train the entire model simultaneously, maintaining uniform training frequencies across all modalities for each decision period \(r\). However, this approach of uniform training frequency allocation may not optimize efficiency. The goal is to identify the optimal modality allocation for each round \(r\) to enhance overall FL performance. While the allocation of training resources for modalities in MFL is similar to the client selection problem in classical FL—which has been effectively addressed by using deep reinforcement learning to select clients for training—this method cannot be directly applied to our context. The reason is that client selection is a binary decision: a client is either selected (i.e. 1) or not (i.e. 0), resulting in a relatively manageable action space for deep reinforcement learning applications. Conversely, in the modality computation resource allocation challenge, we focus on combinations of modalities, increasing the scope from \(M\) modalities to \(S = 2^M - 1\) possible combinations. Furthermore, the number of training iterations for each modality combination in each local training period \(r\) is not binary. Instead, for modality combination \(s\), its potential training iterations during period \(r\) could range from \(0\) to \(\lfloor\frac{T}{t_s}\rfloor\), where \(T\) is the total available local training time. Thus, if one were to apply a reinforcement learning approach directly, the action space could potentially expand to \(\prod_{s = 1} ^S \lfloor\frac{T}{t_s}\rfloor\), which becomes impractically large, especially since realistic FL settings require extended local training periods to minimize communication overhead. Consequently, developing an effective method to allocate modality combinations for training to achieve optimal performance in an MFL setting remains an unexplored challenge in existing literature.

\section{Proposed Method}
\label{sec:proposed}
To address the challenge of adaptively allocating modality combination training resources effectively within each local training period, our proposed method, FlexMod, contains three main steps: 1. We quantify the factors of each modality combination that impact the resource allocation decision. 2. We utilize the determined modality combination factors to formulate an optimization problem for each local training epoch. 3. We provide the training orders of the modality combinations in each local epoch and offer corresponding theoretical analysis. In this section, we will introduce each step in detail.

\subsection{Quantifying Modality Combination Factors}

\subsubsection{Encoder Quality}
Due to variations in data structures across different modalities, the corresponding modality encoders differ. Consequently, training difficulties also vary, naturally leading to the conclusion that encoders which are more challenging to train should receive greater attention. These encoders should be allocated more computational resources, allowing for more frequent training sessions within a single local training period compared to those required by simpler modality encoders. Therefore, the first element we consider involves assessing the quality of the modality encoder (i.e. how well the modality encoder has been trained) to impact the necessity and frequency of training.

A simplistic approach might be to estimate training progress based solely on the size of the encoder model, assuming that larger models progress more slowly and therefore require more resources. However, this method is rigid as it only assesses the difficulty of training modality encoders at the start and does not allow for adaptive reallocation of resources based on actual progress. Another intuitive method involves monitoring the differences, such as the L2 Norm, between consecutive updates of each encoder model. However, this approach is inequitable due to the structural differences between modality encoders. To address the challenge of quantifying training quality across differently structured modality encoders, we propose a solution inspired by prototype learning. Prototypes encapsulate the semantics of data within a given class.

% creating uniformity in output structure despite inherent differences in the original data structures of various modalities. By processing the data through the encoders, each morality's extracted features can be aligned structurally. 

At each client, denoted as $n$, local prototypes for each class within each modality are generated. We denote by $k \in \{1, \dots, K\}$ the label classes for each training sample at each client. Correspondingly, we define $X_{n,k}^m$ as the set of training samples of class $k$ and modality $m$ at client $n$. Therefore, we introduce the local prototype $p_{n,k}^m$ as the average value of the features extracted by the modality encoder:

\begin{align}
    p_{n,k}^m = \frac{1}{|X_{n,k}^m|}\sum_{i \in {\mathds{1}_{y_{n,i} = k}}} \theta^m(x_{n,i}^m),
\end{align}
where $x_{n,i}^m$ is a data sample from $X_{n,k}^m$. To ensure fairness among prototypes for different modalities, it is essential to observe the following guidelines for prototype generation: 1. Local prototypes across various modalities should adhere to the same architecture, meaning that the structure of outputs (i.e. features extracted $\theta^m(x_{n,i}^m)$) through modality encoders should remain consistent despite inherent differences in the original data structures of each modality. By processing the data through these encoders, the extracted features of each modality can be structurally aligned. 2. The local prototypes of different modalities should be normalized to bring all prototypes across modalities to the same magnitude.

Each client generates local prototypes for each modality after every training session and then transmits them to the server. Although this process incurs additional communication costs, these are relatively minor compared to the updates of the model and can therefore be considered negligible. Upon receiving the local prototypes for each modality, the server creates a global prototype for modality \( m \) and class \( k \) as follows:

\begin{align}
    p_k^m = \frac{1}{N}\sum_{n = 1}^N p_{n,k}^m.
\end{align}

After the server generates global prototypes for each modality \( m \), it uses these prototypes to assess the quality of the modality encoder. The rationale for this approach is that the distinctiveness of global prototypes reflects the effectiveness of the modality encoders. Specifically, the more distinguishable the global prototypes are for different classes within a modality, the more effectively the modality encoder can extract distinct features, indicating higher quality. Conversely, if the global prototypes for different classes within a modality overlap significantly, it suggests that the modality encoder requires further training to achieve optimal performance.

To quantify the degree of distinction or overlap among the global prototypes, we propose using cosine similarity. The cosine similarity metric \( \cos(p_{k}^m, p_{o}^m) \) between two prototypes \( p_{k}^m \) and \( p_{o}^m \) within modality \( m \) is calculated as follows:

\begin{align}
    \cos(p_{k}^m, p_{o}^m) = \left( \frac{p_{k}^m}{\|p_{k}^m\|} \cdot \frac{p_{o}^m}{\|p_{o}^m\|} \right),
\end{align}

We then define the modality encoder quality index \( \omega^m \) for each modality \( m \) as:

\begin{align}
    \omega^m = \frac{1}{K}\sum_{k \neq o}\cos(p_{k}^m, p_{o}^m).
\end{align}

Note that a lower \( \omega^m \) value indicates well-separated global prototypes, suggesting that the modality encoder for modality \( m \) is well-trained and thus requires fewer computational resources. Conversely, a higher \( \omega^m \) value indicates poorly separated global prototypes, signifying that the encoder is undertrained and necessitates additional training.

We then normalize the modality encoder quality as follows:
\begin{align}
\omega^m = \frac{\omega^m}{\sqrt{\sum_{i} (\omega^i)^2}}
\end{align}

By holding each individual modality's encoder quality index, and with our goal to allocate computational resources to each modality combination, we present the modality combination encoder quality index $\Omega^s$ as:
\begin{align}
    \Omega^s = \sum_{m \in \mathcal{C}_s} \omega^m,
\end{align}
where $\mathcal{C}_s$ is the set of modalities in modality combination $s$.

\subsubsection{Importance}
In the allocation of computational resources, it is crucial to consider not only the proficiency of modality encoders (how well an encoder is trained) but also the significance of each modality (how much a modality contributes to the final prediction). The concept of selecting specific modalities or features has been extensively explored in previous research \cite{cheng2022greedy}. In this paper, we propose using the Shapley Value to capture the importance of each modality.

To compute the Shapley Value for each modality, we assume that the server has access to a limited set of validation data, approximately 1\% of the total training data held by clients. The server then employs the pre-trained modality encoders, including both the modality-specific encoder \(\theta^m\) for each modality \(m\) and the header \(h(\cdot)\), to calculate the Shapley Value of modality \(m\) as follows:

\begin{align}
    \gamma^m = &\sum_{\mathcal{S} \subset \mathcal{Z} \setminus \{Z_{\text{val}}^m\}} \frac{|\mathcal{S}|!(M-|\mathcal{S}|-1)!}{M!} \nonumber\\
    &\cdot\left( f\left(\theta^0\left(\mathcal{S}\cup \{Z_{\text{val}}^m\}\right), Y_{\text{val}}\right) - f\left(\theta^0(\mathcal{S}), Y_{\text{val}}\right) \right),
\end{align}
where $\mathcal{Z} = \{Z^1_{\text{val}}, \dots, Z_{\text{val}}^M \}$ and each $Z^m_{\text{val}} = \theta^m(X^m_{\text{val}})$, with $X^m_{\text{val}}$ representing the feature input from modality $m$ in the validation dataset. Moreover, $\mathcal{S}$ represents a subset of $\mathcal{Z}$. It is important to note that in our framework, the header encoder integrates all extracted modality features to generate the predicted soft labels. Consequently, in the formula, $\theta^0(\mathcal{S})$ indicates that for the extracted feature $Z^m$ included in $\mathcal{S}$, $Z^m$ is utilized, while for those not included, zero tensors are used to represent missing features.

In the calculations of the Shapley Value outlined above, we observe that \(f(\cdot)\) represents our predetermined loss function, specifically the cross-entropy loss. Previous research \cite{baltruvsaitis2018multimodal} indicates that incorporating additional modalities generally enhances the predictive performance, suggesting that the Shapley Value, \( \gamma^m \), for a given modality \(m\) should be negative. Additionally, the range of \( \gamma^m \) varies across different modalities.

To integrate the Shapley Value with the Modality Encoder Quality Index, \( \omega^m \), which lies within the interval [0,1], normalization of the Shapley Value is necessary. We normalize Shapley Value, \( \gamma^m \), as follows:
\begin{align}
\gamma^m = \frac{\gamma^m}{\sqrt{\sum_{i} (\gamma^i)^2}}
\end{align}
This normalization ensures that \( \gamma^m \) ranges from 0 to 1. As the value of \( \gamma^m \) increases, it signifies that the impact of a modality on the final prediction becomes more pronounced. Consequently, modalities with higher \( \gamma^m \) values should be prioritized due to their greater influence on capturing accurate modality features.

By maintaining each individual modality's importance index $\gamma^m$, and with our goal to allocate computational resources to each modality combination, we present the modality combination importance index $\Gamma^s$ as:
\begin{align}
    \Gamma^s = \sum_{m \in \mathcal{C}_s} \gamma^m,
\end{align}
where $\mathcal{C}_s$ is the set of modalities in modality combination $s$.

\subsubsection{Training Time}
Another factor that could impact resource allocation is the training time cost of each modality combination \(s\), which we have previously defined as \(t_s\). The value of \(t_s\) should be obtained before the FL training step.

\subsection{Formulating the Optimization Problem }
% Having explored the critical components required for computational resource allocation, we are now prepared to construct a solvable optimization problem. Several challenges still need to be addressed. A primary issue is that both the modality quality index, denoted as $\omega^m$, and the modality importance, $\gamma^m$, are defined for each individual modality $m$. However, our resource allocation decisions are based on combinations of modalities $s$, each of which may include multiple modalities. Therefore, we introduce the modality combination importance $\Gamma^s$ and modality combination quality $\Omega^s$ for each combination $s$ as follows:
% \begin{align}
%     & \Omega^s = \sum_{m \in \mathcal{C}_s} \omega^m; ~~~~ \Gamma^s = \sum_{m \in \mathcal{C}_s} \gamma^m
% \end{align}

With the established values of $\Gamma^s$ and $\Omega^s$, we can define the optimization problem for the server to optimize resource allocation for each client during each training period $r$. The utility function for this period, $U^r$, is defined as:
\begin{align}
    U^r(\boldsymbol{a}^r) = \sum_{s=1}^S(\beta^r\Omega^s + (1-\beta^r) \Gamma^s) \cdot a_s^r
\end{align}
where $\boldsymbol{a}^r = (a_1^r, \cdots, a_S^r)$ and $\beta^r$ represents the weighting factor in this sum during the local period $r$. The value of $\beta^r$ will be discussed in further detail subsequently. The computational resources expended under the allocation $\boldsymbol{a}^r$ are given by:
\begin{align}
    \mathcal{T}(\boldsymbol{a}^r) = \sum_{s=1}^S t_s \cdot a_s^r
\end{align}
where $t_s$ represents the time required to train each modality combination $s$ once using local data. Thus, we formalize the computational resource allocation optimization problem as:

\begin{align}
\textbf{P1:}\quad \max & \quad U^r(\boldsymbol{a}^r)\nonumber\\
\text{s.t.} & \quad \mathcal{T}(\boldsymbol{a}^r) \leq T, \nonumber\\
& \quad a_s^r \in \mathbb{Z},
\end{align}

\textbf{Remark:} The feasibility of employing Equations (8) and (11), which uses the sum of indices from individual modalities as a combined modality index, merits discussion. Let us consider two modalities, \(m_1\) and \(m_2\), as examples. The indices for modality \(m_1\) alone are denoted as \(\omega^{m_1}\) and \(\gamma^{m_1}\), and similarly, for \(m_2\) as \(\omega^{m_2}\) and \(\gamma^{m_2}\). When \(m_1\) and \(m_2\) are combined, the indices are summed: \(\omega^{m_1} + \omega^{m_2}\) and \(\gamma^{m_1} + \gamma^{m_2}\). This approach suggests that training \(m_1\) and \(m_2\) separately results in utility values equivalent to training them together, represented mathematically as \(U^r((a_{\{m_1\}}^r = 1, a_{\{m_2\}}^r = 1, a_{\{m_1, m_2\}}^r = 0)) = U^r((a_{\{m_1\}}^r = 0, a_{\{m_2\}}^r = 0, a_{\{m_1, m_2\}}^r = 1))\). Notably, the time cost of training modalities together is less than training each separately. Therefore, prioritizing combined training over separate training not only conserves time but also minimizes the reliance on outdated information. This aligns with the rationale that integrating modalities more comprehensively, rather than in isolation, enhances efficiency.

In our utility function, Equation (12), we integrate the importance of modality combinations $\Gamma^s$ and the quality of modality combinations $\Omega^s$ through a weight $\beta^r$ in each local training period $r$. We conducted a preliminary experiment to demonstrate the significance of carefully determining the value of $\beta^r$. We considered three fixed values of $\beta^r$ for each round $r$: 0, 0.5, and 1. If $\beta^r$ is fixed at 1 across all training rounds, our utility function only considers the quality indices of the encoder, ignoring the importance indices. Conversely, a value fixed at 0 means only the importance indices are considered. The results shown in Fig. \ref{beta}(a) indicate that different $\beta^r$ values significantly impact the allocation decision. Furthermore, Fig. \ref{beta}(b) demonstrates that varying $\beta^r$ affects the final federated learning convergence performance. Therefore, setting an appropriate $\beta^r$ is crucial. Moreover, since the quality and importance indices change during the training process, a fixed $\beta^r$ value is not efficient. Hence, an adaptive $\beta^r$ over time is necessary.

\begin{figure}[t]
\centering
 \subfloat[Impact on the allocation decision]{\includegraphics[width=0.495\linewidth]{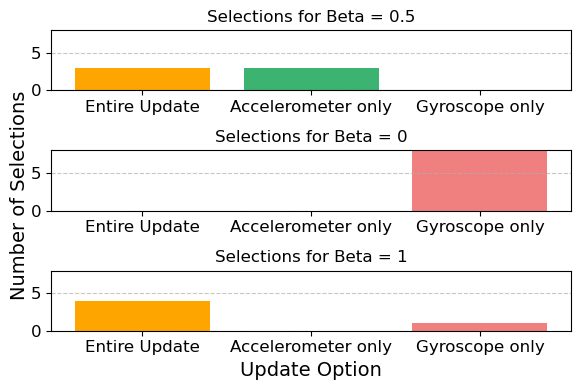}} 
\subfloat[Impact on FL performance]{\includegraphics[width=0.495\linewidth]{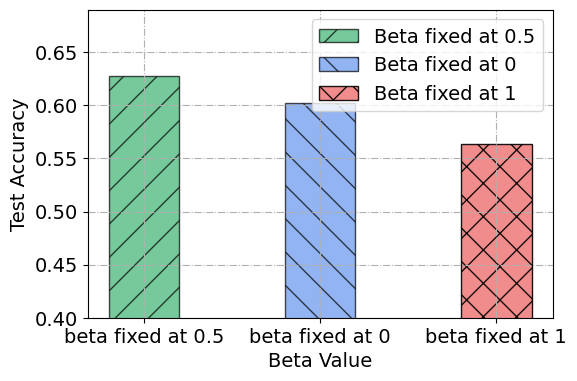}} 
\caption{The weight can impact not only the allocation decision for a given round but also the final convergence performance. Figure (a) shows that at a given round, if beta varies between 0, 0.5, and 1, the combination allocation decision could be different. Figure (b) shows that with different values of fixed weight, the final model achieves different performance levels.} \label{beta}
\end{figure}

We formalize the determination of the weighting factor for modality combination quality, denoted as $\beta^r$, as a Deep Reinforcement Learning (DRL) problem characterized by a continuous action space. To structure the DRL framework, we initially delineate the state, action, and reward components relevant to our study.

\textbf{State:} The state in round \( r \) is denoted by \( \mathcal{s}^r \) and comprises two primary elements. To minimize the risk of including redundant information, we consider states at the modality level rather than the modality combination level. Each element is represented as a vector: the modality importance vector \(\hat{\gamma} = (\gamma^1, \dots, \gamma^M)\) and the modality quality vector \(\hat{\omega} = (\omega^1, \dots, \omega^M)\). Thus, the state at round \( r \) is expressed as \( \mathcal{s}^r = (\hat{\gamma}, \hat{\omega}) \). The values \(\gamma^m\) and \(\omega^m\) for each modality \( m \) at round \( r \) are derived from data aggregated from each client before the start of round \( r \).

\textbf{Action:} The agent at the server side has task to determine the weighting factors $\beta^r$ in each round $r$. The action space is continuous, subject to the following constraints:
\begin{align}
\label{action}
    \beta^{r} \in [0,1], \quad \forall r
\end{align}

\textbf{Reward:} In accordance with the reward structure detailed in \cite{wang2020optimizing}, the reward at the end of each round $r$, denoted as $R_r$, is defined as $R_r = \phi^{\text{Acc}^r - \text{Acc}^{\text{target}}} - 1$. Here, $\text{Acc}^r$ represents the testing accuracy of the global model $\Theta^r$ on the held-out validation set after round $r$, and $\text{Acc}^{\text{target}}$ is the target accuracy at which the training in federated learning will cease once the model achieves this accuracy on the validation dataset. The parameter $\phi$ is a positive constant that ensures the reward increases exponentially with the testing accuracy. As described in \cite{wang2020optimizing}, the reward $R_r$ ranges from -1 to 0, corresponding to accuracies between 0 and 1. When federated learning terminates upon reaching the target accuracy, the reward reaches its maximum value of 0.

Compared to directly selecting modality combinations for training using DRL methods, which carries the risk of an unsolvable problem due to the excessively large action space, solving the new DRL problem for determining the weights $\beta^r$ can be achieved with relative ease, as it aligns with typical DRL problems with continuous action spaces. We employ a classical method known as the Deep Deterministic Policy Gradient (DDPG) algorithm to address this new DRL problem.

The DDPG algorithm employs a complex network architecture that consists of four neural networks: the current Actor network, the target Actor network, the current Critic network, and the target Critic network. These networks are designed with distinct roles to ensure continuous and stable learning in environments characterized by continuous action spaces. Both the current and target Actor networks follow an identical structural framework, where states are inputs and actions are the outputs. Specifically, the current Actor network processes the existing state \( \mathcal{s}^r \), generating the optimized action \( \mathcal{A}^r \) for this state. In contrast, the target Actor network receives the anticipated subsequent state \( \hat{\mathcal{s}}^{r+1} \) as input and produces the action \( \hat{\mathcal{A}}^{r+1} \), considered optimal for that future state. This approach enables the continual adaptation of actions for both present and anticipated states, thus improving the algorithm’s predictive accuracy. Likewise, the current and target Critic networks, which are structurally identical, assess the value of taking particular actions in given states. The current Critic network processes the current state \( \mathcal{s}^r \) and action \( \mathcal{A}^r \), outputting the Q-value \( Q(\mathcal{s}^r, \mathcal{A}^r) \), which quantifies the expected utility of the action at that state. Meanwhile, the target Critic network processes the predicted next state \( \hat{\mathcal{s}}^{r+1} \) and action \( \hat{\mathcal{A}}^{r+1} \), and provides a Q-value \( Q(\hat{\mathcal{s}}^{r+1}, \hat{\mathcal{A}}^{r+1}) \) that reflects the anticipated utility of the action in the upcoming state. The training of the current Critic network is facilitated by computing the target Q-value using the formula:
\begin{align}
    \text{Target Q-value} = R_r + \lambda Q(\hat{\mathcal{s}}^{r+1}, \hat{\mathcal{A}}^{r+1})
\end{align}
where \( R_r \) is the reward obtained in round \( r \), and \( \lambda \) is the discount factor, which prioritizes future rewards over immediate ones. This target Q-value is crucial for guiding the minimization of the loss between the predicted Q-value \( Q(\mathcal{s}^r, \mathcal{A}^r) \) by the current Critic network and the target Q-value, thereby enhancing policy improvement and robust value estimation across continuous action domains.

In summary, the two current networks (Actor and Critic) operate on the present state or action, whereas the two target networks (Actor and Critic) deal with the subsequent state or action. By storing transitions \( (\mathcal{s}^r, \mathcal{A}^r, R^r, \hat{\mathcal{s}}^{r+1}) \), we utilize experience replay technology to update the DDPG networks, thus enhancing learning efficiency.

\subsection{Organizing the Training Order}
With the DDPG agent, we can ascertain the weighting factors $\beta^r$ during each local training period $r$. Given $\beta^r$, solving the convex optimization problem \textbf{P1} becomes straightforward, as its convex nature is well-documented in the literature \cite{boyd2004convex}. Consequently, by resolving \textbf{P1} on the server side before each round $r$, we can derive the computational allocation decision vector $\boldsymbol{a}^r = (a^r_1, \ldots, a^r_S)$, which outlines the selected modality combinations for the $r$-th local training period. The remaining challenge in our proposed method is to organize the training sequence for the chosen combinations where $a^r_s \neq 0$.

To design the training sequence, we consider the training frequency of each combination, represented by $\boldsymbol{a}^r$, for each round. We begin with a theoretical analysis to aid in organizing the training sequence. We denote the total number of updates in the local training period $r$ as $E_r = \sum_{s=1}^S a_s^r$. For each update $e = 1, \dots, E_r$, the combination being trained is denoted by $\boldsymbol{s}_e$. Our analysis is based on the following assumptions:

\begin{assumption}
\label{ass1}
\emph{
There exist positive constants $L < \infty$, for any client $n \in [N]$, any modality $m \in [M]$, such that for all $\Theta$ and $\Theta'$, the objective function satisfies:
\begin{align}
    % \lVert \nabla f_n(\Theta) - \nabla f_n(\Theta') \rVert 
    % & \leq L \lVert \Theta - \Theta' \rVert, \\
    \lVert \nabla_m f_{n}(\Theta) - \nabla_m f_{n}(\Theta') \rVert &\leq L \lVert \Theta - \Theta' \rVert.
\end{align}
}
\end{assumption}

\begin{assumption}
\label{ass2}
\emph{
There exists a constant $\delta$ such that the squared Euclidean norm of $\nabla_{m} f_n(\Theta)$ is uniformly bounded:
\begin{align}
\left\|\nabla_{m} f_n( \Theta)\right\|^2 \leq \delta^2.  
\end{align}
}
\end{assumption}

We consider that starting from the initial entire global model $\Theta^r$ at round $r$. The optimal of the global model after the local training round $r$ could be denoted as $ \hat{\Theta}^{r+1}$, which should be all modality updates simultaneously during all local updates $E_r$. However, we know that due the computational resource limitations, such all modality updates can not be achieved. Thus, the real entire global model after the local training period $r$ should $\Theta^{r+1}$, in which in the local updates $e = 1, \dots, E_r$, only the modalities in combination $\boldsymbol{s}_e$ being trained.

\begin{theorem}
\label{thm1}
Suppose Assumptions \ref{ass1}--\ref{ass2} hold. Starting from the same initial global model \(\Theta^r\), the difference between the real entire global model $\Theta^{r+1}$ and the optimal global model $\hat{\Theta}^{r+1}$ at round \( r+1 \) is bounded as follows:
\begin{align}
    & \|\Theta^{r+1} - \hat{\Theta}^{r+1}\|^2  \nonumber\\
    \leq & 2\eta^2 \sum_{e=1}^{E_r} \left( \prod_{j=e}^{{E_r} -1} (2 + 2 \eta^2 L^2 |\mathcal{C}_{\boldsymbol{s}_j}|) \right) (M - |\mathcal{C}_{\boldsymbol{s}_e}|) \delta^2,
\end{align}
where $\eta$ is the learning rate, and $|\mathcal{C}_{\boldsymbol{s}_e}|$ represents the number of modalities selected to update in the local update $e$.
\end{theorem}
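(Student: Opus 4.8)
The plan is to compare, starting from the common point $\Theta^{(0)}=\hat\Theta^{(0)}=\Theta^r$, the realized in-round trajectory $\{\Theta^{(e)}\}_{e=0}^{E_r}$, whose step $e$ updates only the encoder blocks in $\mathcal{C}_{\boldsymbol{s}_e}$, with the ideal trajectory $\{\hat\Theta^{(e)}\}_{e=0}^{E_r}$, whose every step updates all $M$ modality blocks, and to control the gap $\Delta_e:=\Theta^{(e)}-\hat\Theta^{(e)}$ (so $\Delta_0=0$ and $\Delta_{E_r}=\Theta^{r+1}-\hat\Theta^{r+1}$) via a one-step recursion that is then unrolled. Throughout I would work with the aggregated loss $f=\frac{1}{N}\sum_n f_n$ driving the global-model updates; by Jensen's inequality (convexity of the norm) Assumptions~\ref{ass1}--\ref{ass2} carry over from the $f_n$ to $f$ with the same $L$ and $\delta$, so that $\lVert\nabla_m f(\Theta)-\nabla_m f(\Theta')\rVert\le L\lVert\Theta-\Theta'\rVert$ and $\lVert\nabla_m f(\Theta)\rVert^2\le\delta^2$ may be used directly.

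First I would expand the one-step update block by block. On a modality block $m\in\mathcal{C}_{\boldsymbol{s}_e}$ both trajectories take a gradient step, so the block-$m$ part of $\Delta_e$ equals the block-$m$ part of $\Delta_{e-1}$ minus $\eta(\nabla_m f(\Theta^{(e-1)})-\nabla_m f(\hat\Theta^{(e-1)}))$; on a block $m\notin\mathcal{C}_{\boldsymbol{s}_e}$ only the ideal trajectory moves, contributing $+\eta\nabla_m f(\hat\Theta^{(e-1)})$. Hence $\Delta_e=\Delta_{e-1}-\eta v_e$, where the block-$m$ entry of $v_e$ is $\nabla_m f(\Theta^{(e-1)})-\nabla_m f(\hat\Theta^{(e-1)})$ on the $|\mathcal{C}_{\boldsymbol{s}_e}|$ updated blocks and $-\nabla_m f(\hat\Theta^{(e-1)})$ on the $M-|\mathcal{C}_{\boldsymbol{s}_e}|$ skipped blocks. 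Summing the squared block norms, applying Assumption~\ref{ass1} on the updated blocks (where the relevant displacement is the full-model gap $\lVert\Theta^{(e-1)}-\hat\Theta^{(e-1)}\rVert=\lVert\Delta_{e-1}\rVert$) and Assumption~\ref{ass2} on the skipped blocks, gives $\lVert v_e\rVert^2\le|\mathcal{C}_{\boldsymbol{s}_e}|\,L^2\lVert\Delta_{e-1}\rVert^2+(M-|\mathcal{C}_{\boldsymbol{s}_e}|)\,\delta^2$.

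Next I would plug this into $\lVert\Delta_e\rVert^2=\lVert\Delta_{e-1}-\eta v_e\rVert^2\le 2\lVert\Delta_{e-1}\rVert^2+2\eta^2\lVert v_e\rVert^2$ to obtain
\begin{align}
\lVert\Delta_e\rVert^2 \le \big(2+2\eta^2 L^2 |\mathcal{C}_{\boldsymbol{s}_e}|\big)\lVert\Delta_{e-1}\rVert^2 + 2\eta^2 (M-|\mathcal{C}_{\boldsymbol{s}_e}|)\,\delta^2. \nonumber
\end{align}
Unrolling this linear recursion from $e=E_r$ down to $e=1$ and using $\Delta_0=0$ to discard the homogeneous term leaves a sum over $e$ of the forcing terms $2\eta^2(M-|\mathcal{C}_{\boldsymbol{s}_e}|)\delta^2$, each multiplied by the product of the per-step amplification factors $2+2\eta^2 L^2|\mathcal{C}_{\boldsymbol{s}_j}|$ accumulated over the remaining steps of the round, which is precisely the bound asserted in the theorem.

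I expect the main obstacle to be getting the first, blockwise decomposition exactly right: one must partition the coordinates into the updated set $\mathcal{C}_{\boldsymbol{s}_e}$ and its complement, apply the Lipschitz bound on the former against the \emph{full}-model difference $\lVert\Delta_{e-1}\rVert$ (not a per-block difference) and the uniform gradient bound on the latter, so that the counts $|\mathcal{C}_{\boldsymbol{s}_e}|$ and $M-|\mathcal{C}_{\boldsymbol{s}_e}|$ emerge correctly and the coefficient stays $2+2\eta^2 L^2|\mathcal{C}_{\boldsymbol{s}_e}|$ rather than picking up a spurious extra factor from the $\lVert a-b\rVert^2\le 2\lVert a\rVert^2+2\lVert b\rVert^2$ step; one also has to fix a convention for whether the index in $\boldsymbol{s}_e$ labels the update \emph{producing} $\Theta^{(e)}$, since that pins down the index range in the final product. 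A secondary, routine point is the transfer of the per-client Assumptions~\ref{ass1}--\ref{ass2} to the aggregated objective via Jensen's inequality; everything after the recursion is a standard telescoping argument.
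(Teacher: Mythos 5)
Your proposal is correct and follows essentially the same route as the paper: the identical one-step recursion $\lVert\Delta_e\rVert^2\le(2+2\eta^2L^2|\mathcal{C}_{\boldsymbol{s}_e}|)\lVert\Delta_{e-1}\rVert^2+2\eta^2(M-|\mathcal{C}_{\boldsymbol{s}_e}|)\delta^2$, obtained by splitting blocks into updated (Lipschitz against the full-model gap) and skipped (uniform gradient bound) coordinates, then unrolled from $\Delta_0=0$. The only cosmetic difference is that the paper applies Jensen at the outset to reduce to per-client trajectories driven by each $f_n$ (the correct FL framing) and computes the $e=1$ base case explicitly, whereas you transfer the assumptions to the aggregated loss and let the recursion absorb the base case; both yield the stated bound.
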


% \begin{figure}[t]
% 	\centering	\includegraphics[width=0.91\linewidth]{figures/Training_order.png}
% 	\caption{The illustration of training order. By sorting modality combinations based on their size in descending order, the accumulated error compared to full modality training can be minimized.} \label{fig: training_order}
% \vspace{-0.2in}
% \end{figure}

\begin{proof}
For any entire model $\Theta$, we denote that $\nabla_{\boldsymbol{s}_e} f_n(\Theta)$ as the $e$-th updates as:
\begin{align}
\nabla_{\boldsymbol{s}_e} f_n(\Theta) &= \begin{bmatrix} G_1 \\ G_2 \\ \vdots \\ G_M \end{bmatrix}
\end{align}
\text{where}
\begin{align}
G_m = \begin{cases} 
(\nabla_{m} f_n(\Theta))^{\top} & \text{if } m \in \boldsymbol{s}_e \\
0 & \text{otherwise}
\end{cases}
\end{align}
On the contract, for the optimal case, all modality being trained, we utilize  $\nabla f_n(\Theta)$ to represent.

Thus, we start with analyzing the difference between the real global model and the optimal global model which are:
\begin{align}
    & \|\Theta^{r+1} - \hat{\Theta}^{r+1}\|^2 \leq \frac{1}{N}\sum_{n=1}^N\|\Theta_n^{r,{E_r} } - \hat{\Theta}_n^{r,{E_r} }\|^2
\end{align}

Note that for,
\begin{align}
    \|\Theta_n^{r,1} - \hat{\Theta}_n^{r,1}\|^2 = & \eta^2 \sum_{m \notin \mathcal{C}_{\boldsymbol{s}_1}}\|\nabla_{m} f_n({\Theta}^r)\|^2 \nonumber\\
    \leq & \eta^2 (M - |C_{s_1}|) \delta^2
\end{align}
and we have
\begin{align}
    &\|\Theta_n^{r,e+1} - \hat{\Theta}_n^{r,e+1}\|^2 \nonumber\\
    = & \|\Theta_n^{r,e}  - \hat{\Theta}_n^{r,e} + \eta\nabla_{\boldsymbol{s}_{e+1}} f_n(\Theta_n^{r,e})- \eta\nabla f_n(\hat{\Theta}_n^{r,e})\|^2 \nonumber\\
   \leq  & 2 \|\Theta_n^{r,e} - \hat{\Theta}_n^{r,e}\|^2 + 2 \eta^2\|\nabla_{\boldsymbol{s}_{e+1}} f_n(\Theta_n^{r,e})- \nabla f_n(\hat{\Theta}_n^{r,e})\|^2 \nonumber\\
    \leq & 2 \|\Theta_n^{r,e} - \hat{\Theta}_n^{r,e}\|^2 + 2\eta^2\sum_{m \notin \mathcal{C}_{\boldsymbol{s}_{e+1}}}\|\nabla_{m} f_n(\hat{\Theta}_n^{r,e})\|^2 \nonumber\\
    &  + 2\eta^2\sum_{m \in \mathcal{C}_{\boldsymbol{s}_{e+1}}}\|\nabla_{m} f_n(\Theta_n^{r,e})- \nabla_{m} f_n(\hat{\Theta}_n^{r,e})\|^2 \nonumber\\
    \leq & 2 \|\Theta_n^{r,e} - \hat{\Theta}_n^{r,e}\|^2 + 2 L^2 \eta^2 \sum_{m \in \mathcal{C}_{\boldsymbol{s}_{e+1}}}\|\Theta_n^{r,e}- \hat{\Theta}_n^{r,e}\|^2 \nonumber\\
    & + 2\eta^2\sum_{m \notin \mathcal{C}_{\boldsymbol{s}_{e+1}}}\|\nabla_{m} f_n(\hat{\Theta}_n^{r,e})\|^2 \nonumber\\
    \leq &(2 + 2 \eta^2L^2|\mathcal{C}_{\boldsymbol{s}_{e+1}}|)\|\Theta_n^{r,e}- \hat{\Theta}_n^{r,e}\|^2 + 2\eta^2| M -\mathcal{C}_{\boldsymbol{s}_{e+1}}|\delta^2 
\end{align}

Thus, we can have:
\begin{align}
    & \|\Theta_n^{r,{E_r} } - \hat{\Theta}_n^{r,{E_r} }\|^2 \nonumber\\
     \leq & 2\eta^2 \sum_{e=1}^{E_r}  \left( \prod_{j=e}^{{E_r} -1} (2 + 2 \eta^2 L^2 |\mathcal{C}_{\boldsymbol{s}_j}|) \right) (M - |\mathcal{C}_{\boldsymbol{s}_e}|) \delta^2
\end{align}

\end{proof}

To achieve the tightest bound on the divergence between the actual and optimal global models, we must strategically adjust the training order to minimize the sum on the right-hand side of the inequality. 

\begin{Corollary}
Selecting \(|\mathcal{C}_{\boldsymbol{s}_e}|\) in descending order makes the bound in Theorem \ref{thm1} the tightest.
\end{Corollary}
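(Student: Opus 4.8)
The right-hand side of Theorem~\ref{thm1} depends on the schedule of the $E_r$ updates only through the order in which the combination sizes $|\mathcal{C}_{\boldsymbol{s}_1}|,\dots,|\mathcal{C}_{\boldsymbol{s}_{E_r}}|$ appear; the multiset of these sizes is fixed once \textbf{P1} has been solved, so only its arrangement is free. Writing $c_e:=|\mathcal{C}_{\boldsymbol{s}_e}|$ and $A:=2\eta^2 L^2$, the bound equals $2\eta^2\delta^2$ times
\[
\Phi(c_1,\dots,c_{E_r})\;=\;\sum_{e=1}^{E_r}\Bigl(\prod_{j=e}^{E_r-1}(2+Ac_j)\Bigr)(M-c_e),
\]
and the plan is to show by a standard exchange (adjacent-transposition) argument that $\Phi$ is minimized over permutations of the fixed multiset $\{c_1,\dots,c_{E_r}\}$ exactly when $c_1\ge c_2\ge\cdots\ge c_{E_r}$. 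The argument splits into a single-swap inequality, stating that any ``ascent'' can be removed without increasing $\Phi$, followed by the usual bubble-sort reasoning that deduces global optimality of the sorted schedule from this local statement.

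The heart of the proof is the single-swap comparison. I would first note that exchanging the entries in positions $e$ and $e{+}1$ leaves almost every summand of $\Phi$ unchanged: for $e'>e{+}1$ neither the product nor the factor $(M-c_{e'})$ involves those positions, while for $e'<e$ the two affected entries enter the product only through the symmetric block $(2+Ac_e)(2+Ac_{e+1})$. Hence the change in $\Phi$ reduces to the change in the two summands indexed by $e$ and $e{+}1$, which share a common unchanged product factor $Q$; pulling $Q$ out and setting $x=c_e$, $y=c_{e+1}$, the net change takes the form $Q\,(x-y)$ times a residual in $x,y,M,A$. For an interior swap this residual simplifies, using $(2+Ax)(2+Ay)=4+2A(x+y)+A^2xy$, to $2+A(x+y)+A^2xy+AM$, which is strictly positive since $A,x,y,M\ge 0$. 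Because $x<y$ at an ascent, the swap therefore strictly decreases $\Phi$.

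With the single-swap inequality in hand the remainder is routine: starting from an arbitrary schedule, repeatedly eliminate ascents by adjacent swaps; $\Phi$ strictly decreases at each step and there are finitely many permutations, so the process halts at a schedule with no ascents, i.e.\ a non-increasing one, which is therefore the minimizer of $\Phi$. This is precisely the assertion that choosing the $|\mathcal{C}_{\boldsymbol{s}_e}|$ in descending order makes the bound of Theorem~\ref{thm1} the tightest. The step I expect to be the main obstacle is pinning down the sign of the residual in the single-swap inequality: interior swaps are clean as above, but the swap that touches the final update ($e=E_r$) has an empty product factor, and there the residual is $1-A(M-x-y)$ rather than the manifestly positive expression above, so one additionally needs the mild smallness of the step size implicit in this kind of analysis (for instance $2\eta^2L^2(M-2)\le 1$) to keep that boundary case consistent; once this is verified the bubble-sort argument carries through unchanged.
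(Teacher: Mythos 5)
Your overall strategy --- an adjacent-transposition (exchange) argument showing that any ascent $c_e<c_{e+1}$ can be removed without increasing the bound, followed by bubble-sort --- is a genuinely different and in principle more rigorous route than the paper's own proof, which argues greedily by induction that placing the largest remaining $|\mathcal{C}_{\boldsymbol{s}_j}|$ at each successive position minimizes each term (and the downstream product factors) one at a time. Your interior-swap computation is correct: the change in the two affected summands factors as $Q\,(x-y)$ times $2+A(x+y)+A^2xy+AM>0$ with $A=2\eta^2L^2$, so any interior ascent can be removed with a strict decrease.

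The gap is in the boundary swap of positions $E_r-1$ and $E_r$, and it is worse than the step-size caveat you flag. With the product written as $\prod_{j=e}^{E_r-1}$, the index $E_r$ appears in \emph{no} product while the index $E_r-1$ appears in \emph{every} product with $e'\le E_r-1$. Hence your claim that for $e'<e$ the two swapped entries ``enter the product only through the symmetric block'' fails for exactly this swap: each earlier summand carries a factor $(2+Ac_{E_r-1})$ but no factor involving $c_{E_r}$, so swapping the last two entries perturbs all of them, increasing each (when $y>x$) by $A(y-x)\bigl(\prod_{j=e'}^{E_r-2}(2+Ac_j)\bigr)(M-c_{e'})\ge 0$. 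These neglected contributions can dominate the at most $(y-x)$ decrease you extract from the last two terms, since the products grow geometrically in $E_r$; so the boundary exchange inequality is not merely conditional on small $\eta$, it can fail outright under the theorem's literal indexing. The clean repair is to note that the recursion in the proof of Theorem \ref{thm1} actually unrolls to products $\prod_{j=e+1}^{E_r}(2+Ac_j)$ (the displayed bound is off by one); with those indices every swap, boundary included, leaves all other summands invariant and changes the two affected ones by $Q(y-x)(1+AM)$, so your argument goes through uniformly with no smallness assumption. You should either prove the corollary for the corrected bound or treat the final position as a separate case; as written, the proof is incomplete.
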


\begin{proof}
The corollary can be proved using the steps of an induction proof.
Consider the term for \(e=1\):
\begin{align}
\prod_{j=1}^{{E_r} -1} (2 + 2 \eta^2 L^2 |\mathcal{C}_{\boldsymbol{s}_j}|)
\end{align}
This product factor is the same regardless of the order of \(|\mathcal{C}_{\boldsymbol{s}_j}|\). The second factor in the term for \(e=1\) is $
(M - |\mathcal{C}_{\boldsymbol{s}_1}|)$
To minimize this, we should choose the largest \(|\mathcal{C}_{\boldsymbol{s}_1}|\) since \(M - |\mathcal{C}_{\boldsymbol{s}_1}|\) will then be the smallest. Besides, as the largest \(|\mathcal{C}_{\boldsymbol{s}_1}|\) has been selected, the product factor in the second term is lower than the scenario where \(|\mathcal{C}_{\boldsymbol{s}_1}|\) is not the largest.

We make the inductive hypothesis that selecting the \(k\) largest \(|\mathcal{C}_{\boldsymbol{s}_j}|\) in descending order minimizes both the sum of the first \(k\) terms and the product factor in the \((k+1)\)-th term.

% \begin{align}
% \prod_{j=k+1}^{{E_r} -1} (2 + 2 \eta^2 L^2 |\mathcal{C}_{\boldsymbol{s}_j}|)
% \end{align}
Based on the inductive hypothesis, the product factor in the \((k+1)\)-th term is minimized and is not affected by the selection of \(|\mathcal{C}_{\boldsymbol{s}_{k+1}}|\). To minimize the \((k+1)\)-th term, we should choose the largest remaining \(|\mathcal{C}_{\boldsymbol{s}_{k+1}}|\). This ensures that $(M - |\mathcal{C}_{\boldsymbol{s}_{k+1}}|)$ is minimized. As the selection of \(|\mathcal{C}_{\boldsymbol{s}_{k+1}}|\) will not change the product term in \((k+1)\), the overall \((k+1)\)-th term is minimized. Thus, we have the sum of the \((k+1)\) terms minimized by selecting the largest in descending order. Additionally, since the first \(k+1\) largest \(|\mathcal{C}_{\boldsymbol{s}_j}|\) have been selected, it minimizes the product term in the \((k+2)\)-th term.

By recursively applying this selection strategy, where we always choose the largest remaining \(|\mathcal{C}_{\boldsymbol{s}_j}|\), we ensure that each step minimizes the bound incrementally, leading to the tightest possible bound for the entire expression.

\end{proof}

\textbf{Remark:} By sorting the cardinality of modality combinations \(|\mathcal{C}_{\boldsymbol{s}_e}|\) in descending order, we begin the training process with combinations that contain a larger number of modalities. This approach not only tightens the bound but also aligns with the intuitive understanding that initiating training with more comprehensive modality combinations can reduce the staleness information obtained by each encoder.

% \begin{align}
%     & \|\Theta_n^{r,E} - \hat{\Theta}_n^{r,E}\|^2 \nonumber\\
%      \leq & \sum_{e=1}^E \left[\left( \prod_{k=e}^{E-1} (2 + 2\eta^2 L^2|\mathcal{C}_{\boldsymbol{s}_{k+1}}|) \right) \times 2 \eta^2 |M - \mathcal{C}_{\boldsymbol{s}_{e}}| \delta^2\right]
% \end{align}

% Therefore, the gap between optimal entire global model and real global model is bounded as:
% \begin{align}
%     & \|\Theta^{r+1} - \hat{\Theta}^{r+1}\|^2  \nonumber\\
%      \leq & \sum_{e=1}^E \left( \prod_{k=e}^{E-1} (2 + 2\eta^2 L^2|\mathcal{C}_{\boldsymbol{s}_{k+1}}|) \right) \times 2 \eta^2 |M - \mathcal{C}_{\boldsymbol{s}_{e}}| \delta^2
% \end{align}

\section{Experiments}
\label{sec:exp}
All experiments were conducted using federated learning simulations on the PyTorch framework and trained on Geforce RTX 3080 GPUs. Each experiment was repeated five times with different random seeds, and the averaged results are reported. The detailed experimental settings are provided below.

\subsection{Experiments Setup}
\subsubsection{Dataset}

\textbf{UCI-HAR.} 
The UCI-HAR dataset \cite{UCI} is a well-established dataset for human activity recognition. It comprises data from 30 participants (average age: 24) performing six activities: walking, walking upstairs, walking downstairs, sitting, standing, and lying down. These activities are recorded using smartphone sensors, specifically accelerometers and gyroscopes, which capture three-dimensional motion data. The data are sampled at a rate of 50 Hz, resulting in 128 readings per window per sensor axis.

\textbf{KU-HAR.} 
The KU-HAR dataset \cite{sikder2021ku} is a more recent human activity recognition dataset collected from 90 participants (75 male and 15 female) performing 18 different activities. For this study, we focused on the six activities that overlap with those in the UCI-HAR dataset. In KU-HAR, each time-domain subsample includes 300 time steps per axis for both accelerometer and gyroscope data. 

\textbf{MVSA-Single.}
The MVSA-Single dataset \cite{niu2016sentiment} is designed for research in multimodal sentiment analysis, capturing both textual and visual cues from social media. Each entry in the dataset consists of a single image paired with its corresponding textual content, annotated to reflect the expressed sentiment. The dataset includes various sentiment labels that categorize the emotional tone conveyed by the both text and image. 

\subsubsection{Encoder and Header Model}
For the UCI-HAR and KU-HAR datasets, which have two modalities (accelerometers and gyroscopes), we use a CNN-based model as the encoder for the accelerometer data. This model consists of five convolutional layers and one fully connected layer. For the gyroscope data, we use an LSTM-based model with one LSTM layer and one fully connected layer. Both encoders produce an output of length 128. The header model comprises two fully connected layers.

For the MVSA-Single dataset, which includes text and image modalities, we use four layers CNN with a modified output layer to produce an output dimension of 128 as the image encoder. For the text encoder, we use a two layers LSTM model with an output of 128. The header model for this dataset also consists of two fully connected layers.

\subsubsection{FL Training Setup}
For the UCI-HAR and KU-HAR datasets, we consider a system with 10 clients, each possessing 600 data samples of both accelerometer and gyroscope modalities. The client data is distributed non-IID with a Dirichlet distribution ($\alpha = 10$) in the main experiment. We also investigate other non-IID degrees. The learning rate is tuned from \{0.005, 0.01, 0.1\}, with a decay factor of 0.99 until it reaches 0.001. The training batch size is set to 64. In the main experiments, the local training GPU time $T$ is set to 24. Based on a rough estimate of the ratio of full training time to each encoder's training time, we round the time to integers, setting the full model training time $t_{\{acc, ~gyro\}}$ to 5, the accelerometer encoder training time $t_{acc}$ to 4, and the gyroscope encoder training time $t_{gyro}$ to 3. We also test different values of $T$ in the experiments.

For the MVSA-Single dataset, we consider a system with 20 clients, each having 300 data samples with both text and image modalities. The client data is non-IID with a Dirichlet distribution ($\alpha = 10$). The learning rate is tuned from \{0.001, 0.01, 0.1\}, with a decay factor of 0.99 until it reaches 0.0001. The training batch size is set to 64. The local training GPU time $T$ is set to 24. Based on a rough estimate of the ratio of full training time to each encoder's training time, we round the time to integers, setting the full model training time $t_{\{image, ~text\}}$ to 5, the image encoder training time $t_{image}$ to 3, and the text encoder training time $t_{text}$ to 3.

\subsubsection{DDPG Training Setup}
We train the DDPG agent on different datasets to achieve the best performance for each specific task. The Actor network in the DDPG model is designed with three fully connected layers, each containing 64 neurons. The input size of the Actor network corresponds to the state size, and the output size matches the action size, where each action passes through a softmax layer to produce a probability distribution. The Critic network also comprises three fully connected layers with 64 neurons each, taking the combined state and action as input and outputting a single Q-value. The DDPG model employs the Adam optimizer with a learning rate of 0.0001 for both the Actor and Critic networks. The soft target updates for both networks are performed with a tau value of 0.001 to ensure stability. The target accuracy $\text{Acc}^{\text{target}}$ is set to 0.68 for the training on the UCI-HAR dataset, 0.68 for the training on KU-HAR dataset and 0.7 for the training on MVSA-Single dataset.

\subsubsection{Baselines} 
To the best of our knowledge, our paper is the first to adaptively allocate the training frequencies of each modality encoder in the MFL setting. Consequently, there are no direct methods available for comparison. To validate the efficiency of our proposed method, FlexMod, we consider the following baselines: \textbf{Entire Update:} In this method, all modality encoders are updated simultaneously, meaning the update frequencies of the modality encoders are identical. The local training time constraint $T$ is also applied. We also consider a series of baseline methods that update only a single modality encoder. For the UCI-HAR/KU-HAR datasets, we use the following two baselines: \textbf{Acce Only:} Only the Accelerometer encoder is updated during each local training. \textbf{Gyro Only:} Only the Gyroscope encoder is updated during each local training. Similarly, for the MVSA-Single dataset, we consider \textbf{Image Only} and \textbf{Text Only} as baseline methods, where only the text or image modality encoder is updated.

\begin{figure}[h]
\centering
 \subfloat[Non-IID ($\alpha = 1$)]{\includegraphics[width=0.495\linewidth]{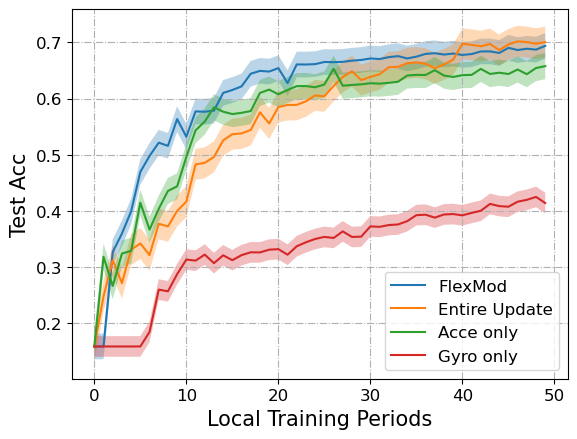}} 
\subfloat[Non-IID ($\alpha = 10$)]{\includegraphics[width=0.495\linewidth]{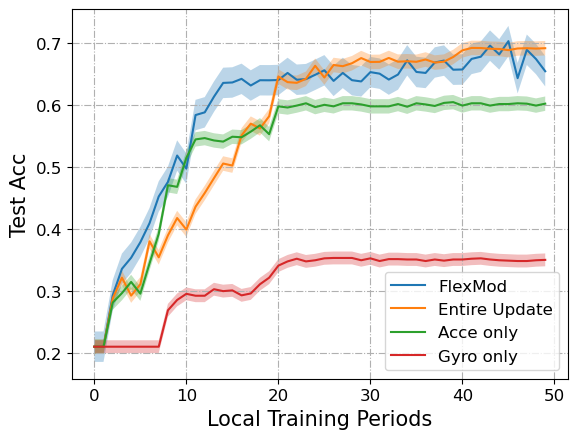}} 
\caption{Convergence performances on UCI-HAR} \label{UCI-perf}
\end{figure}

\subsection{Experiment Results}
The primary objective of the experiments is to demonstrate the differences in the number of global training rounds required by various methods to achieve a specific test accuracy, thereby highlighting the difference in convergence speeds.

\textbf{Performance Comparison.} 
We first compare the convergence performance of our proposed method with baseline methods on the UCI-HAR dataset, focusing on the more realistic non-IID setting compared to the IID setting. We consider two levels of non-IID distribution (i.e., Dirichlet $\alpha = 1/10$). As shown in Fig. \ref{UCI-perf}, for both non-IID settings, our proposed methods outperform the baseline methods in terms of convergence speed. It is important to note that with a sufficiently large number of global training rounds, \textbf{Entire Update} should achieve the highest accuracy, as it avoids any staleness in updating the modality encoder. However, it requires a significantly longer time to achieve this accuracy. Our proposed method can reach comparable test accuracy to \textbf{Entire Update} but with a much faster convergence speed. For example, in Fig. \ref{UCI-perf}(b), to achieve a test accuracy of 0.6, our method requires 12 rounds, compared to the 20 rounds required by \textbf{Entire Update}, demonstrating the faster speed of our proposed method. Compared with the other two baselines (\textbf{Acce Only} and \textbf{Gyro Only}), our method shows better performance in both convergence speed and final test accuracy. \textbf{Gyro Only} has a low final accuracy due to the lesser importance of this modality for this classification task. However, the higher test accuracy of \textbf{Entire Update} and our proposed method compared to \textbf{Acce Only} indicates that incorporating this modality is beneficial.

\begin{figure}[h]
    \centering
    \begin{minipage}[t]{0.49\linewidth}
	\includegraphics[width=1\linewidth]{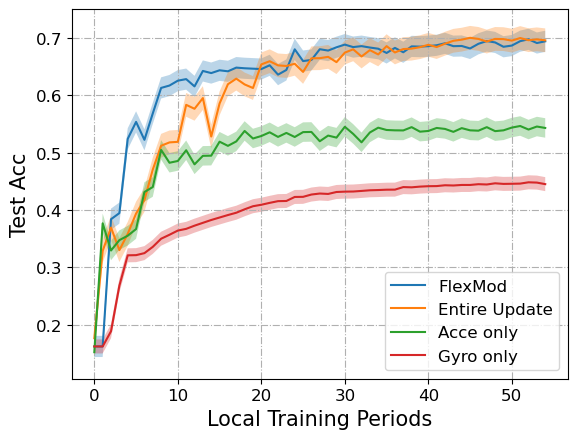}
	\caption{\textcolor{black}{KU-HAR Non-IID}} \label{fig:KU-perf}
    \end{minipage}
     \begin{minipage}[t]{0.49\linewidth}
	\includegraphics[width=1\linewidth]{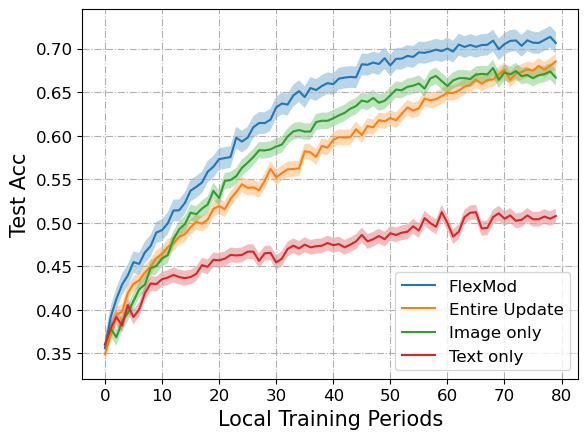}
	\caption{MVSA-Single Non-IID} \label{fig:MVSA-perf}
    \end{minipage}
\end{figure}

We then evaluate our proposed method on the KU-HAR and MVSA-Single datasets under non-IID settings. Similar results to those on the UCI-HAR dataset are observed in Figs. \ref{fig:KU-perf} and \ref{fig:MVSA-perf}. Our proposed method achieves the best convergence performance, with the highest convergence speed while maintaining a comparable final test accuracy to \textbf{Entire Update}.

\textbf{Impact of time constraint $T$.}
In the main experiments, we set the local training time constraint $T$ to be 24. $T = 24$ cannot be fully divided by $t_{\{acc, ~gyro\}} = 5$, potentially causing wasted training time. Here, we set $T = 15$, which allows full division. The results in Fig. \ref{fig:impactT1} show that the improvement of our method slightly decreases compared to the main experiment results in Fig. \ref{UCI-perf} because the \textbf{Entire Update} method can now fully utilize all training resources. However, our method still achieves better performance compared with the baselines. Additionally, fully utilizing local training time is an advantage of our proposed flexible modality encoder training schedule. If the entire update training cannot fit within the remaining time, the client can only be idle, wasting computational resources.

We also consider an extreme condition by setting $T = 4$ on the KU-HAR dataset. With such a small $T$, the local training time is insufficient for even one complete model update, rendering the \textbf{Entire Update} method unavailable. Compared to the other two baseline methods, which update only one modality encoder throughout the training process, Fig. \ref{fig:impactT2} shows that our proposed method, which flexibly trains the modality encoder based on utility, achieves better performance.

\begin{figure}[h]
    \centering
    \begin{minipage}[t]{0.49\linewidth}
	\includegraphics[width=1\linewidth]{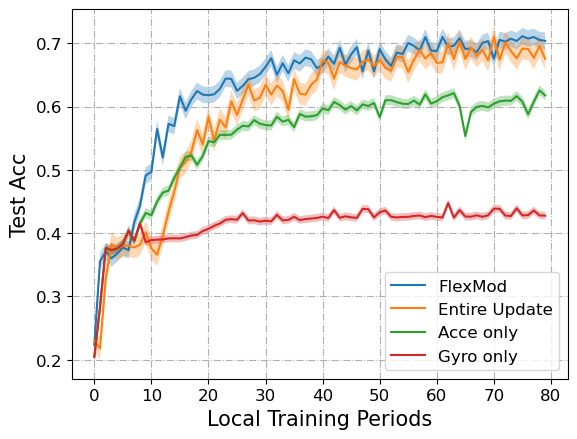}
	\caption{KU-HAR (T = 15)} \label{fig:impactT1}
    \end{minipage}
     \begin{minipage}[t]{0.49\linewidth}
	\includegraphics[width=1\linewidth]{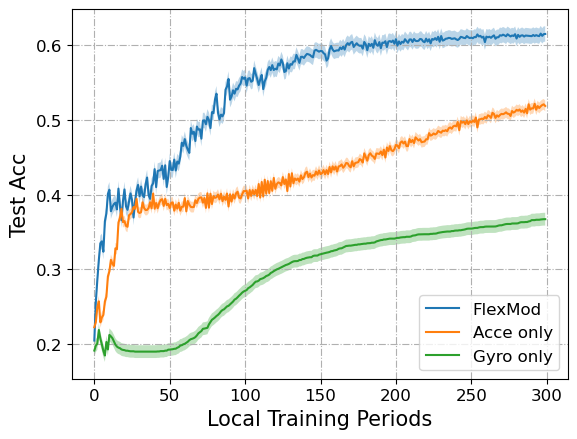}
	\caption{KU-HAR (T = 4)} \label{fig:impactT2}
    \end{minipage}
\end{figure}

\textbf{Combination level decision.} In our proposed method, we determine the training frequencies at the combination level rather than the modality level. As shown in Fig. \ref{motivation} (b), training two modality encoders simultaneously (i.e., as a combination) reduces the total training time compared to training each encoder separately. Here, we present additional experimental results on the MVSA-single dataset. The results in Fig. \ref{Combination_level}(a) indicate that under the same time constraint $T$, allocating at the combination level exhibits better convergence performance. Additionally, Fig. \ref{Combination_level}(b) reflects two key points. First, it demonstrates that making decisions at the combination level allows for more frequent updates of each encoder. These results further validate the advantage of making allocation decisions at the combination level. Second, it shows that the image encoder is trained more frequently than the text encoder in both combination and modality level allocations, highlighting the differing importance of each encoder and the need to prioritize the more critical modality.

\begin{figure}[h]
\centering
 \subfloat[Convergence Performance]{\includegraphics[width=0.495\linewidth, height=0.33\linewidth]{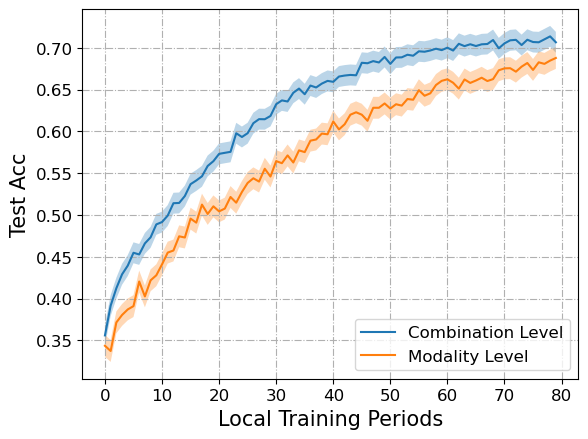}} 
\subfloat[Update Frequencies]{\includegraphics[width=0.495\linewidth, height= 0.33\linewidth]{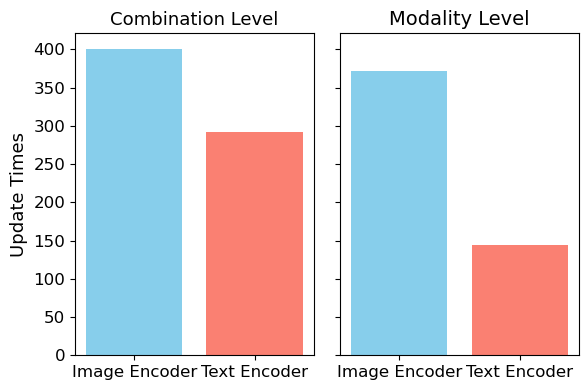}} 
\caption{Impact of Combination Level Decision} \label{Combination_level}
\end{figure}

\begin{figure}[h]
    \centering
    \begin{minipage}[t]{0.49\linewidth}
	\includegraphics[width=1\linewidth, height=0.69\linewidth]{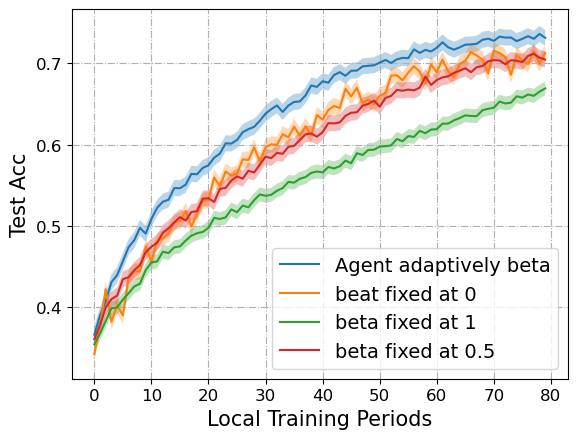}
	\caption{Impact of sum weight.} \label{fig:weight}
    \end{minipage}
     \begin{minipage}[t]{0.49\linewidth}
	\includegraphics[width=1\linewidth, height=0.69\linewidth]{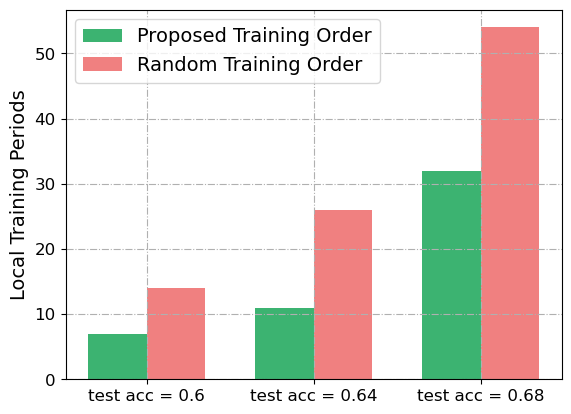}
	\caption{Impact of training order.} \label{fig:order}
    \end{minipage}
\end{figure}

\textbf{Impact of Sum Weight.} In the proposed method, the sum weight \(\beta^r\) is determined by the DDPG agent for each local training period \(r\). Here, we investigate the impact of this value on the MVSA-Single dataset. First, consider two extreme conditions where \(\beta^r\) is fixed at 0 and 1. When \(\beta^r\) is fixed at 0, the utility function considers only the modality importance index in decision-making. Conversely, when it is fixed at 1, the utility function relies solely on the modality encoder quality. The results in Fig. \ref{fig:weight} show that both extreme weight settings perform worse than our proposed method, underscoring the importance of considering both factors in the allocation decision. However, fixing \(\beta^r\) at an arbitrary constant (i.e., 0.5) limits performance improvement and can even perform slightly worse than the extreme cases. This highlights the necessity of adaptively setting the weight and using DDPG to determine the weight based on the specific state of each round.

\textbf{Impact of Training Order.} In this experiment, we investigate the impact of training order on KU-HAR dataset. Based on the insight from Theorem 1, after obtaining the frequencies of each combination during the local training, we train the combinations in descending order based on the number of modalities contained in each combination. Fig. \ref{fig:order} demonstrates that, compared to random order training, the proposed training order achieves the same target test accuracies with fewer local training periods.

\section{Conclusion}
\label{sec:con}
Existing MFL methods often uniformly allocate computational frequencies across all modalities. In this paper, we propose FlexMod, a novel approach to enhance computational efficiency in MFL by adaptively allocating training resources for each modality encoder based on their importance and training requirements. We employ prototype learning to assess the quality of modality encoders, use Shapley values to quantify the importance of each modality, and adopt the DDPG method from deep reinforcement learning to optimize the allocation of training resources. Experimental results on three real-world datasets validate the efficiency of our proposed method. One limitation of this work is that we only consider the full participation scenario in the FL setting. Thus, a future research direction is to consider the client partial participation setting.

\bibliographystyle{IEEEtran}
\bibliography{reference}

\end{document}